\def\eqref#1{equation~\ref{#1}}
\def\1{\bm{1}}
\def\eps{{\epsilon}}
\def\va{{\mathbf{a}}}
\def\vc{{\mathbf{c}}}
\def\vs{{\mathbf{s}}}
\def\vx{{\mathbf{x}}}
\def\vz{{\mathbf{z}}}
\DeclareMathAlphabet{\mathsfit}{\encodingdefault}{\sfdefault}{m}{sl}
\SetMathAlphabet{\mathsfit}{bold}{\encodingdefault}{\sfdefault}{bx}{n}
\def\cB{{\mathcal{B}}}
\def\cP{{\mathcal{P}}}
\def\cY{{\mathcal{Y}}}
\def\cZ{{\mathcal{Z}}}
\newcommand{\E}{\mathbb{E}}
\newcommand{\softmax}{\operatorname{softmax}}
\DeclareMathOperator*{\argmax}{arg\,max}
\newcommand{\abs}[1]{\left\lvert#1\right\rvert}
\newcommand{\norm}[1]{\left\lVert#1\right\rVert}
\theoremstyle{definition}
\newtheorem{theorem}{Theorem}
\lstdefinestyle{lstStyle}{numbersep=9pt, tabsize=2, showspaces=false, showstringspaces=false}
\tiny\color{gray},
\def\indented#1{\list{}{}\item[]}
\let\indented=\endlist
\definecolor{orange}{rgb}{1,0.5,0}
\definecolor{mdred}{rgb}{0.7,0,0}
\definecolor{mdgreen}{rgb}{0.05,0.6,0.05}
\definecolor{mdblue}{rgb}{0,0,0.7}
\definecolor{dkblue}{rgb}{0,0,0.5}
\definecolor{dkgray}{rgb}{0.3,0.3,0.3}
\definecolor{slate}{rgb}{0.25,0.25,0.4}
\definecolor{gray}{rgb}{0.5,0.5,0.5}
\definecolor{ltgray}{rgb}{0.7,0.7,0.7}
\definecolor{purple}{rgb}{0.7,0,1.0}
\definecolor{lavender}{rgb}{0.65,0.55,1.0}
\definecolor{green1}{rgb}{0.0, 0.6, 0.0}
\definecolor{green2}{rgb}{0.1, 0.7, 0.5}
\definecolor{green3}{rgb}{0.2, 0.8, 0.7}
\newcommand{\surrogatecb}[1]{\tcbox[colback=orange!20]{#1}}
\newcommand{\relaxcb}[1]{\tcbox[colback=blue!20]{#1}}
\newcommand{\samplingcb}[1]{\tcbox[colback=green3!40]{#1}}
\newcommand{\sfecb}[1]{\tcbox[colback=green2!30]{#1}}
\newcommand{\reparamcb}[1]{\tcbox[colback=green1!20]{#1}}
\newcommand{\surrogatecitet}[1]{\surrogatecb{\citet{#1}}}
\newcommand{\relaxcitet}[1]{\relaxcb{\citet{#1}}}
\newcommand{\sfecitet}[1]{\sfecb{\citet{#1}}}
\newcommand{\reparamcitet}[1]{\reparamcb{\citet{#1}}}
\newcommand{\papercomment}[3]{\ensuretext{\textcolor{#3}{[#1 #2]}}}
\renewcommand{\papercomment}[3]{}  %
\title{Learning with Latent Structures in Natural Language Processing: A Survey}
\author{\Large Zhaofeng Wu \normalsize\\\\University of Washington\\Allen Institute for Artificial Intelligence\\\texttt{zfw7@cs.washington.edu}}
\date{\vspace{-1ex}}
\begin{document}
\maketitle

\begin{abstract}
	While end-to-end learning with fully differentiable models has enabled tremendous success in natural language process (NLP) and machine learning, there have been significant recent interests in learning with latent discrete structures to incorporate better inductive biases for improved end-task performance and better interpretability. This paradigm, however, is not straightforwardly amenable to the mainstream gradient-based optimization methods. This work surveys three main families of methods to learn such models: surrogate gradients, continuous relaxation, and marginal likelihood maximization via sampling. We conclude with a review of applications of these methods and an inspection of the learned latent structure that they induce.\footnote{This work is inspired by \citet{martins-etal-2019-latent}.}
\end{abstract}

\begin{table}[t]
\centering
\begin{tabular}{@{} c @{\hspace{1pt}} | @{\hspace{1pt}} c @{\hspace{1pt}} | @{\hspace{3pt}} c @{\hspace{3pt}} c @{\hspace{3pt}} c @{\hspace{3pt}} c @{}}
\toprule

\multirow{2}{*}{\textbf{Family}} & \multirow{2}{*}{\textbf{Method}} & \textbf{Structure} & \textbf{Discrete} & \textbf{Inference} \\
&& \textbf{or Not} & \textbf{Output} & \textbf{Algo.} \\

\midrule

\multirow{2}{*}{Surrogate} & \surrogatecb{STE}~\citep{hinton2012neural} & Both & \cmark & MAP \\
& \surrogatecb{SPIGOT}~\citep{peng-etal-2018-backpropagating} & \cmark & \cmark & MAP \\

\midrule

\multirow{4}{*}{Relaxation} & \relaxcb{$\softmax$} & \xmark & \xmark \\
& \relaxcb{$\operatorname{sparsemax}$}~\citep{10.5555/3045390.3045561}\relaxcb{+}$^\dagger$ & \xmark & \cmark \\
& \relaxcb{Part-Marginalization} & \cmark & \xmark & Marg. \\
& \relaxcb{SparseMAP}~\citep{sparsemap} & \cmark & \cmark & MAP \\

\midrule

\multirow{6}{*}{Sampling} & \sfecb{Score Function Estimator}~\citep{10.1007/BF00992696} & Both & \cmark & Sampling \\
& \reparamcb{Rectified Distributions}~\citep{louizos2018learning} & \xmark & \phantom{$^{\ddagger}$}\xmark$^{\ddagger}$ \\
& \reparamcb{Gumbel-Max}~\citep{GumbelEmilJulius1954Stoe} & \xmark & \cmark \\
& \reparamcb{Gumbel-softmax}~\citep{gumbel-softmax,maddison2017concrete} & \xmark & \xmark \\
& \reparamcb{Perturb-and-Parse}~\citep{corro2018differentiable} & \cmark & \xmark & MAP \\
& \reparamcb{Direct Loss Minimization}~\citep{mcallester2010direct} & \xmark & \cmark \\

\bottomrule

\end{tabular}
\caption{\label{tab:summary} A high-level summary of the family and the methods reviewed in this survey. $^\dagger\operatorname{sparsemax}$+ refers to $\operatorname{sparsemax}$ and other related functions (see \Cref{sec:relaxation}). We list if the methods are primarily used for inducing structured or unstructured objects, or both. Methods marked with \xmark \ may still be employed for structured induction when the parser decisions are local, such as transition-based parsers (see \Cref{sec:examples}). We also mark if the methods yield discrete output, which can be desirable for, for example, interpretability, and sometimes enables specialized algorithms (e.g., \citealp{hu-etal-2021-r2d2}). $^\ddagger$Some rectified distributions still have a non-zero probability to output continuous solutions, though they may be discretizable. Finally, we indicate the structured inference algorithms required for all methods that are used for inducing structures: MAP, marginalization, or sampling. Depending on the problem, there may not exist an efficient version of a particular inference algorithm, affecting the choice of methods.}
\end{table}

\section{Introduction} \label{sec:intro}
	
With recent advances in deep learning, end-to-end differentiable modules such as LSTM~\citep{lstm} and Transformer~\citep{transformer}, coupled with gradient-based optimization, have become the mainstream paradigm in NLP and have helped achieve state-of-the-art results in NLP without much task-specific, language-specific, or linguistically-inspired design~\citep[\emph{inter alia}]{peters-etal-2018-deep,devlin-etal-2019-bert,2020t5}.

On the other hand, human languages possess inherent structures, such as syntactic, semantic, and discourse structures~\citep{chomsky1965aspects,heim1998semantics}. Consequently, NLP systems have historically leveraged such structures to improve their quality. This was traditionally done with pipelined systems, where a separately trained parser is first used to decode an intermediate structure, which is then used as input for downstream tasks~\citep[\emph{inter alia}]{sennrich-haddow-2016-linguistic,eriguchi-etal-2016-tree,chen-etal-2017-enhanced}. This paradigm, however, faces a few challenges. First, its disjoint modules tend to suffer from cascading errors. Second, such parsers are usually trained in a supervised fashion which, in order to ensure coverage for robustness, can require large amounts of annotated data. Such annotations often require domain expertise and, despite recent efforts such as the Universal Dependencies project~\citep{ud}, remain difficult and expensive to obtain, especially for low-resource languages and domains. Even when sufficient data exist to train a high-quality formalism-specific parser, this specific structural formalism may not be the optimal one for the downstream task, and it is sometimes advantageous to allow the model to discover the best structure targeting the downstream task~\citep[\emph{inter alia}]{kim2017structured,choi2018learning,maillard_clark_yogatama_2019}.

Therefore, previous studies have attempted to jointly train the parser and the downstream predictor and allow the parser to induce the best task-specific structure without the need for structural supervision, a paradigm which we review in this survey.
Such systems, however, suffer from one major difficulty. To induce an intermediate \emph{discrete} structure, the $\argmax$ function is usually used to make decisions as a layer in the parser. However, the gradient of $\argmax$ is either zero (almost everywhere) or undefined, so it does not work well with gradient-based optimization methods, the dominant optimization scheme in NLP and deep learning.

In this work, we survey three main families of approaches that tackle the non-differentiability of $\argmax$. The first family uses biased estimators with some proxy to the true gradient. The second circumvents discreteness using continuous relaxation. The last family casts the optimization problem as marginal likelihood maximization and uses sampling to approximate the gradient. We summarize these methods and several of their key properties in \Cref{tab:summary}. We hope this work could catalyze future studies in augmenting recent NLP models with structural inductive biases.

\section{Background: Structure Prediction} \label{sec:structure-prediction}

\begin{figure*}
	\centering
	\begin{subfigure}{.41\linewidth}
		\includegraphics[height=1.4in]{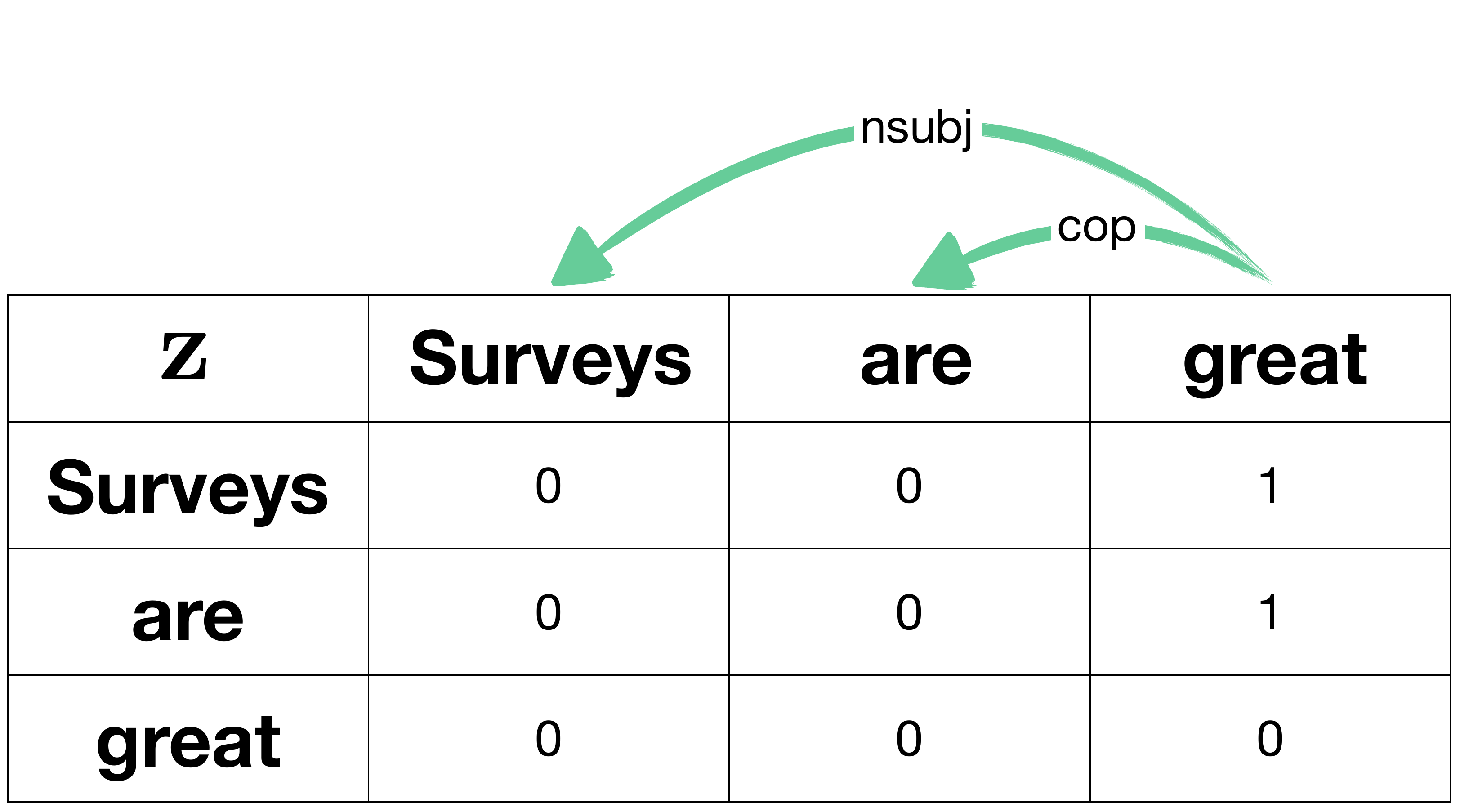}
		\caption{Dependency structure. We show edge labels, though they are ignored in $\vz$.}
		\label{fig:dependency-example}
	\end{subfigure}
	\hspace{.05\linewidth}
	\begin{subfigure}{.52\linewidth}
		\includegraphics[height=1.4in]{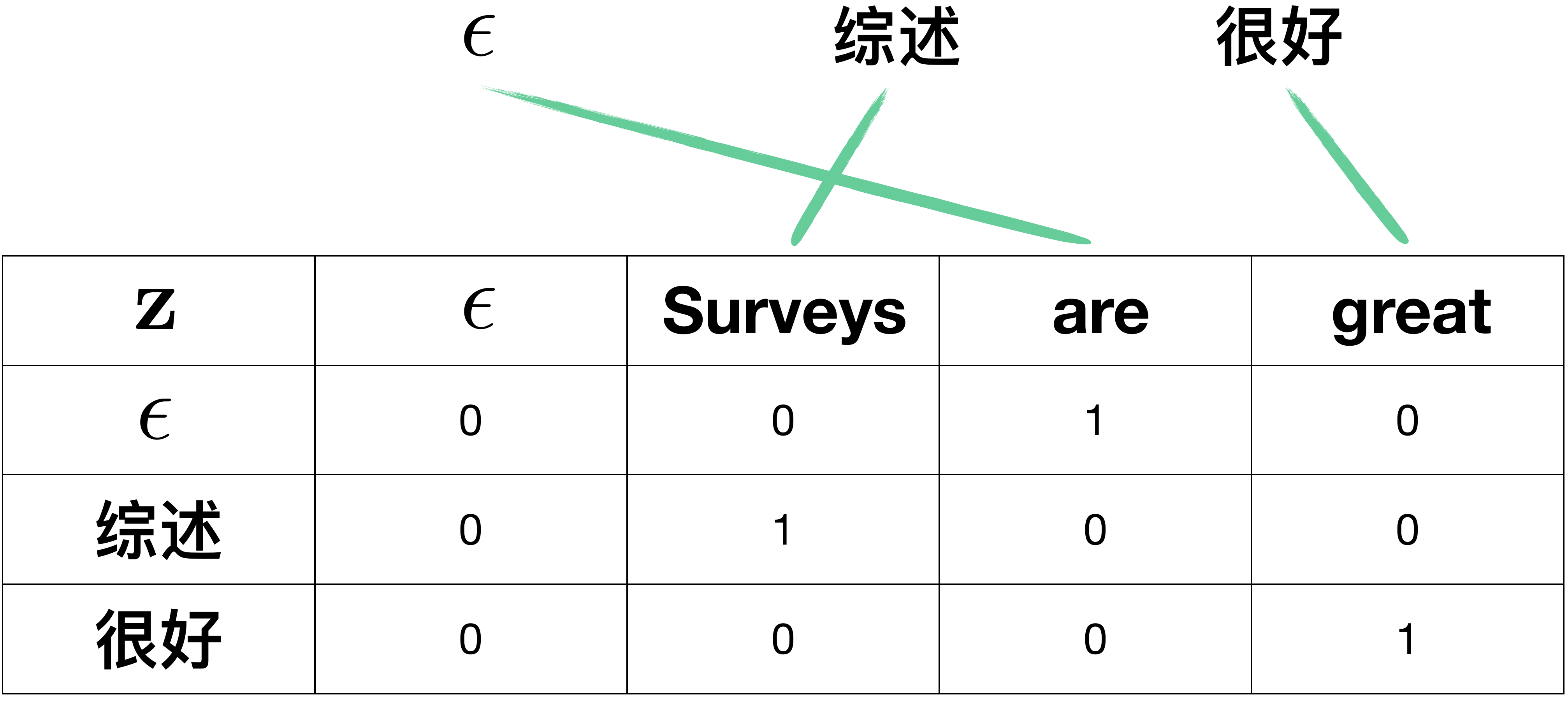}
		\caption{Matching structure. A machine translation example is shown here, though it can be seen in many other tasks.}
		\label{fig:matching-example}
	\end{subfigure}
	\caption{
		Examples of structures as well as, after flattening the binary entries in the tables, their associated $\vz$.
	}
	\label{fig:structure-examples}
\end{figure*}

The task of structure prediction aims to automatically extract structures that, in NLP, are often rooted in linguistic theories~\citep{smith:2011:synthesis}. In this survey, we loosely define ``structure'' and consider any collection of inter-dependent variables as a structure. Abstracting away from specific formalisms, we generically denote any structure with $\vz\in\cZ$ where $\cZ$ is the set of all possible structures and encodes any structural constraints enforced by a formalism.\footnote{$\cZ$ usually depends on $x$, but we drop this dependency notationally for clarity.} As an example, for many bilexical dependency formalisms, syntactic~\citep{ud} or semantic~\citep{ivanova-etal-2012-contrastive}, $\vz$ can be viewed as a collection of binary parts $\vz=[z_1,\cdots,z_{n}]\in\{0,1\}^{n^2}$ where each entry denotes the existence of an edge between a pair of words in a sentence. Here, the structural constraint encoded by $\cZ$ could require, for instance, the arborescence structure for syntactic dependencies. As another example, in matching problems, the words in a source sentence with length $m$ are matched with the words in a target sentence with length $n$. Accounting for non-matched words by matching them with a special $\eps$ symbol, $\vz$ can again be viewed as consisting of binary parts, with length $(m+1)(n+1)$, where each entry denotes if two words are matched. These two types of structures are visualized in \Cref{fig:structure-examples}, and more examples can be found in \Cref{sec:examples}.

A parser finds the highest-scoring structure $\hat{\vz}=\argmax_{\vz\in\cZ} S(\vz \mid x; \theta)$ for some input $x$ with some scoring function $S$ parameterized by $\theta$. We use $\pi(\vz \mid x; \theta)$ to denote a \emph{normalized} scoring function which yields a probability distribution over all $\vz\in\cZ$.
For example, for dependency parsing, a graph-based parser is commonly used to produce a score for every part (for the example in \Cref{fig:dependency-example}, this would be a score for every cell), and the scores that correspond to the activated parts in a structure $\vz$ are summed. Specifically, we can write the scoring function as $S(\vz \mid x; \theta)= \vz^\top f(x;\theta) = \vz^\top\vs$ where $\vs:=f(x;\theta)$
denotes the per-part score produced by some model $f$.

In this survey, for simplicity, we default to settings and terminologies for graph-based dependency parsing.

\section{Latent Structure Prediction} \label{sec:latent-structure-prediction}

As argued in Section~\ref{sec:intro}, it is often desirable to model discrete structures latently as an intermediate representation in a system. Specifically, we still have $\vs=f(x;\theta), \hat{\vz}=\argmax_{\vz\in\cZ}\vz^\top\vs$ as the best structure. We then feed $\hat{\vz}$ to some downstream task-specific predictor $\hat{y}=g(\hat{\vz};\phi)$ parameterized by $\phi$ and incur some loss $\ell=L(\hat{y},y)$.
For some applications, the original input $x$ is also used as an input to $g$ in addition to the intermediate structure $\hat{\vz}$, but we omit it for clarity.

\begin{figure*}
	\centering
	\begin{subfigure}{.49\linewidth}
		\includegraphics{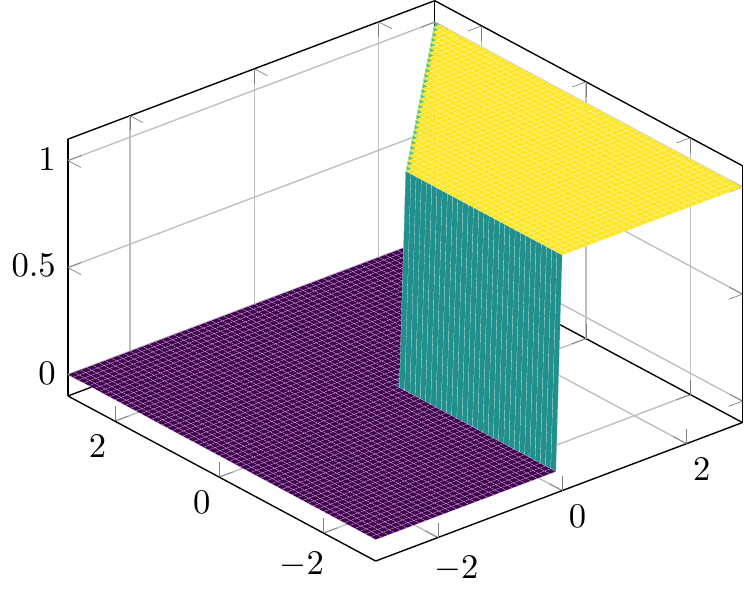}
		\caption{$\argmax$. The gradient is either zero (almost everywhere) or undefined.}
		\label{fig:argmax}
	\end{subfigure}
	\begin{subfigure}{.49\linewidth}
		\includegraphics{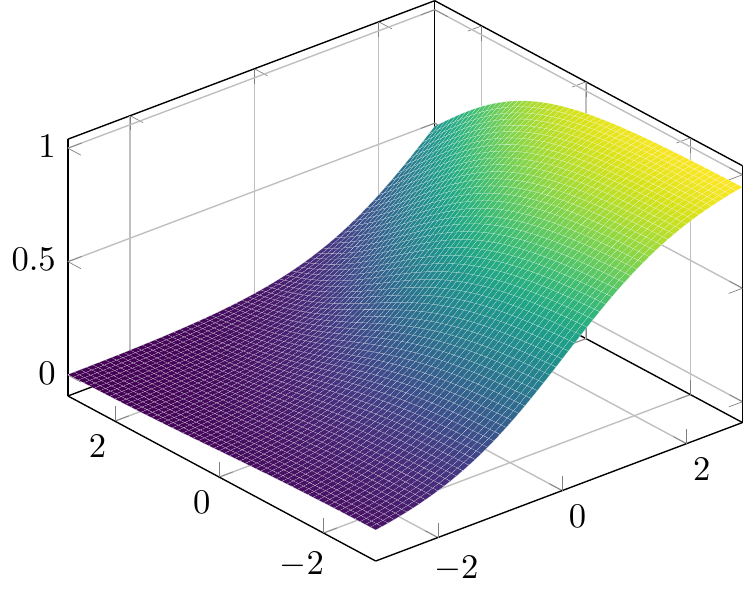}
		\caption{$\softmax$. The gradient is well-defined everywhere.}
		\label{fig:softmax}
	\end{subfigure}
	\caption{
		Plot of the \textbf{(a)} $\argmax$ and \textbf{(b)} $\softmax$ functions.
		Figure is taken from \citet{corro-titov-2019-learning}.
	}
	\label{fig:argmax-softmax}
\end{figure*}

$g$ can be seen as a structure encoder. With linguistic structures, for example, it usually takes the form of graph neural networks which encodes the structure $\hat{\vz}$. TreeLSTMs~\citep{tai-etal-2015-improved} have been widely used~\citep{maillard_clark_yogatama_2019,choi2018learning}, including variants that use different composition functions~\citep{hu-etal-2021-r2d2}. With the advent of graph convolutional networks~\citep{kipf2017semi}, more works have based themselves on this architecture and its variants~\citep{corro-titov-2019-learning,wu2021infusing}.

In practice, $g$ can be pretrained before serving as a frozen component in the final system~\citep{wu-etal-2019-wtmed,marcheggiani-titov-2017-encoding,wu2021infusing}, or jointly trained with $f$ with separate loss signals using multi-task learning~\citep{bowman-etal-2016-fast,hashimoto-etal-2017-joint}. Nevertheless, driven by the desiderata outlined in \Cref{sec:intro}, in this survey, we only consider the case where $f$ and $g$ are jointly trained using only the downstream loss.

The challenge associated with this paradigm lies in the gradient (or Jacobian, in the case of structured output) of $\argmax$, which is either zero (almost everywhere) or undefined (\cref{fig:argmax}). We hence do not have a meaningful gradient with respect to
$\theta$ for optimization.
Below, we introduce three main families of methods that tackle this difficulty.

We note that the non-differentiability of $\argmax$ poses an issue not only for latent structure prediction, but also for non-structural discrete latent decisions, such as in the context of quantization~\citep{quantization}. Therefore, for most methodological families that we review below, we start with the unstructured case. Nevertheless, structure prediction comes with additional difficulties. For example, the feasible set is usually exponential in the number of structural parts, effectively ruling out enumeration, and it also often comes with structural constraints. These methods, therefore, sometimes cannot be directly applied to the structural setting and need modifications to overcome these challenges.

\section{\surrogatecb{Surrogate Gradients}} \label{sec:surrogate-gradients}

One family of approaches directly optimize $L(g(\hat{\vz};\phi),y)$ by introducing some gradient-like quantity for backpropagation. Backpropagating through the model described in \Cref{sec:latent-structure-prediction} yields a well-defined $\nabla_{\hat{\vz}}\ell$. However, since $\argmax$ is used to obtain $\hat{\vz}$ from $\vs$, $\nabla_{\vs}\ell=0$, and hence $\nabla_\theta\ell=0$ and the parameters of the parser will not be updated.

In straight-through estimator (STE; \citealp{hinton2012neural,bengio2013estimating}), $\nabla_{\vs}\ell$ is manually overridden. For example, in the identity STE:
\begin{align} \label{eq:ste}
	\nabla_{\vs}\ell=\nabla_{\hat{\vz}}\ell
\end{align}
as if the identity activation $\hat{\vz}=\vs$ were performed in the forward pass (cf. the actual $\hat{\vz}=\argmax_{\vz\in\cZ}\vz^\top\vs$). It effectively ``skips'' the $\argmax$ operation during the backward pass, hence the name ``straight-through.'' Alternatives exist that override the gradient with that of other activation functions including $\operatorname{sigmoid}$~\citep{bengio2013estimating} and variants of the $\tanh$~\citep{hubara2016bnn} and $\operatorname{ReLU}$~\citep{cai17hwgq} functions. This procedure results in a biased estimator for the true gradient. Nevertheless, despite this biased nature and its simplicity, it empirically works well~\citep{bengio2013estimating,chung2017hierarchical}. Much research in understanding and justifying why such a mismatch between the objective and the learning algorithm remains beneficial is still underway. For example, \citet{yin2018understanding} showed that the gradients provided by properly chosen STE variants correlate to the descent direction that minimizes the population loss.

Being mindful of the structural constraints for STE can help improve the learning process. To see this, we first examine an alternative formulation of STE as proposed by \citet{mihaylova-etal-2020-understanding}.
Consider the hypothetical case where the optimal structure $\vz^*$ is accessible. It can be used to incur some \emph{intermediate} loss, such as the structured perceptron loss $L^{\text{structure}}(\hat{\vz},\vz^*)=\vs^\top\hat{\vz}-\vs^\top\vz^*$.\footnote{Note that the intermediate $L^{\text{structure}}$ is different from the downstream loss function $L$.} This would allow a nonzero gradient with respect to $\vs$: $\nabla_{\vs}\ell=\hat{\vz}-\vz^*$. In practice, however, such an optimal structure may not always be accessible. Nevertheless, an approximation to the optimal structure can be induced using the downstream loss $\ell$. One can view $\hat{\vz}$ as an ``updatable'' quantity and use $\tilde{\vz}=\hat{\vz}-\nabla_{\hat{\vz}}\ell\approx\vz^*$ for this approximation. Effectively, this procedure performs a one-step gradient descent to induce an approximation of the optimal structure. \citet{mihaylova-etal-2020-understanding} noted that this formulation recovers the identity STE in \Cref{eq:ste}:
\begin{align}
	\nabla_{\vs}\ell=\hat{\vz}-\vz^*\approx\hat{\vz}-\tilde{\vz}=\hat{\vz}-\left(\hat{\vz}-\nabla_{\hat{\vz}}\ell\right)=\nabla_{\hat{\vz}}\ell
\end{align}

\begin{figure}[t]
	\centering
	\includegraphics[width=0.5\textwidth]{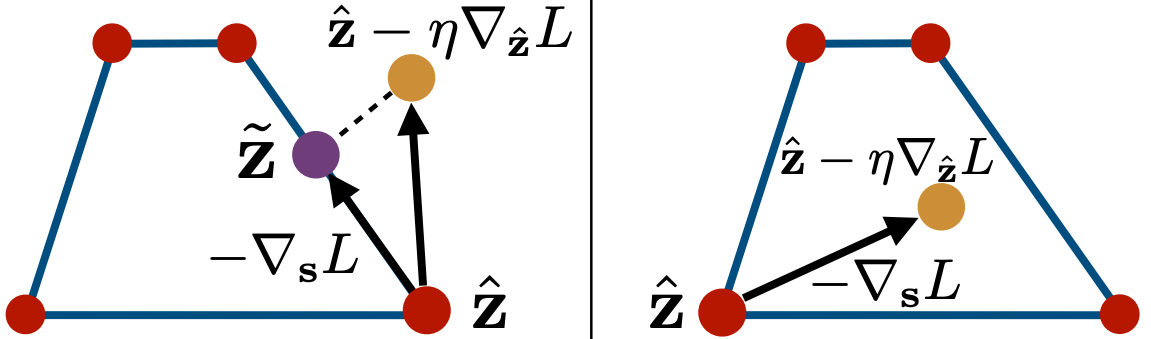}
	\caption{Left: When the gradient update yields a point outside of the convex polytope $\cP$, we perform a projection back to $\cP$. Right: When the gradient update yields a point inside $\cP$, no projection is needed. Figure is taken from \citet{peng-etal-2018-backpropagating}.}
	\label{fig:spigot}
\end{figure}

\citet{peng-etal-2018-backpropagating} noted that this approximation $\tilde{\vz}$ can be outside of the feasible set $\cZ$ or its convex hull $\cP = \operatorname{conv}(\cZ)$. So $\tilde{\vz}$ can be an invalid, let alone optimal, structure. They proposed the method SPIGOT that projects $\tilde{\vz}$ back to $\cP$ when it is outside of $\cP$:
\begin{align}
	\tilde{\vz}^{\text{SPIGOT}}=\operatorname{proj}_\cP(\hat{\vz}-\nabla_{\hat{\vz}}\ell)\approx\vz^*
\end{align}
Then the gradient with respect to $\vs$ can be similarly calculated: $\nabla_{\vs}\ell=\hat{\vz}-\vz^*\approx\hat{\vz}-\tilde{\vz}^{\text{SPIGOT}}$.
Compared to the one-step gradient descent in the case of STE, SPIGOT can be viewed as performing one step of projected gradient descent. This process is illustrated in Figure~\ref{fig:spigot}.

Inspired by SPIGOT, \citet{mihaylova-etal-2020-understanding} explored more numbers of (projected) gradient update steps, an alternative loss based on the CRF loss~\citep{10.5555/645530.655813} in the place of the structured perceptron loss, and exponentiated gradient updates~\citep{KIVINEN19971}. Nevertheless, they did not observe significant improvement in the structured case. They hypothesized that an optimal intermediate approximation may not overall help learning.
\section{\relaxcb{Continuous Differentiable Relaxation}} \label{sec:relaxation}
Another family of approaches relax the nondifferentiable $\argmax$ function with some continuous and differentiable function. In general, rather than taking a single $\hat{\vz}=\argmax_{\vz\in\cZ}S(\vz \mid x;\theta)$, they consider $\hat{\vz}=\E_{\vz\sim\pi(\vz \mid x;\theta)}[\vz]$ (or $\hat{z}_i=\E_{\vz\sim\pi(\vz \mid x;\theta)}[z_i]$ for a specific part $z_i$ in the structure), where $\pi(\vz \mid x;\theta)$ represents the normalized scores.
Writing this out:
\begin{align} \label{eq:relaxation}
    \hat{\vz}=\E_{\vz\sim\pi(\vz \mid x;\theta)}[\vz] = \sum_{\vz \in \cZ} \pi(\vz \mid x;\theta) \vz
\end{align}
We can see that, in \Cref{eq:relaxation}, rather than only considering one best $\vz$, it encodes uncertainty in the latent space by taking a weighted average of all $\vz$.
The attention mechanism is a common example of this type of softening. In the hard case, a model aligns to the token with the highest compatibility score. This, however, is non-differentiable and restricts the focus to only one token. Hence, recent works obtain a distribution among all candidate tokens, a paradigm that has achieved tremendous success~\citep[\emph{inter alia}]{attention,transformer}.

There can be many different choices of $\pi(\cdot)$.
A straightforward and widely-used choice uses the $\softmax$ function. Consider first some categorical and unstructured latent variable where $\cZ$ can be seen as the set of classes represented as one-hot vectors. Then:
\begin{align} \label{eq:onehot-softmax}
	\pi(\vz \mid x;\theta)=\vz^\top\softmax(\vs)
\end{align}

Nevertheless, one downside of $\softmax$ is that it is soft. Even when the temperature parameter~\citep{temperature} is sometimes used to induce sparse probabilities~\citep{gumbel-softmax,zhou-neubig-2017-multi,maillard_clark_yogatama_2019}, it never yields completely discrete 0-1 solutions.
On the other hand, discreteness is sometimes desirable for, for example, interpretability, so many works have sought sparser alternatives to $\softmax$ that are still differentiable.

These alternatives are inspired by the equivalence of $\softmax$ and $\argmax$ coupled with a Shannon entropy term as penalty:\footnote{For a proof, see the appendix of \citet{niculae-etal-2018-towards}.}
\begin{align}
	\softmax(\vs)=\argmax_{\vz\in\Delta^{\abs{\cZ}-1}}\vz^\top\vs+H(\vz)
\end{align}
Substituting the entropy term $H(\vz)$ with other forms of penalty results in functions including $\operatorname{sparsemax}$~\citep{10.5555/3045390.3045561}, $\alpha$-$\operatorname{entmax}$~\citep{peters-etal-2019-sparse}, $\operatorname{fusedmax}$~\citep{10.5555/3294996.3295093}, etc. As derived in \citet{10.5555/3294996.3295093}:
\begin{align}
	\operatorname{sparsemax}(\vs) &= \argmax_{\vz\in\Delta^{\abs{\cZ}-1}}\vz^\top\vs-\frac{1}{2}\norm{\vz}_2^2 \\
	\alpha\operatorname{-entmax}(\vs) &= \argmax_{\vz\in\Delta^{\abs{\cZ}-1}}\vz^\top\vs-\frac{1}{\alpha(\alpha-1)}\norm{\vz}_\alpha^\alpha \\
	\operatorname{fusedmax}(\vs) &= \argmax_{\vz\in\Delta^{\abs{\cZ}-1}}\vz^\top\vs-\frac{1}{2}\norm{\vz}_2^2-\lambda\sum_i \abs{z_i-z_{i-1}}
\end{align}

\begin{figure}[t]
	\centering
	\includegraphics[width=0.98\textwidth]{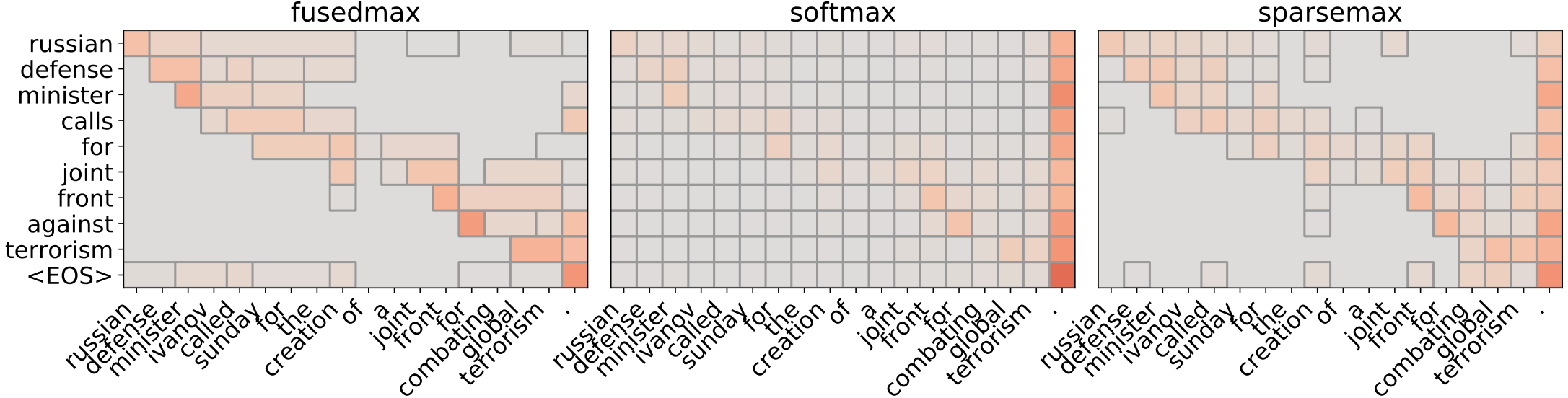}
	\caption{The attention distribution as normalized by three different functions. $\operatorname{fusedmax}$ and $\operatorname{sparsemax}$ produce sparser patterns than $\softmax$ while the attention distribution $\operatorname{fusedmax}$ has more contiguous chunks. Figure is taken from \citet{10.5555/3294996.3295093}.}
	\label{fig:relaxation-functions}
\end{figure}

\begin{figure}[t]
	\centering
	\includegraphics[width=0.6\textwidth]{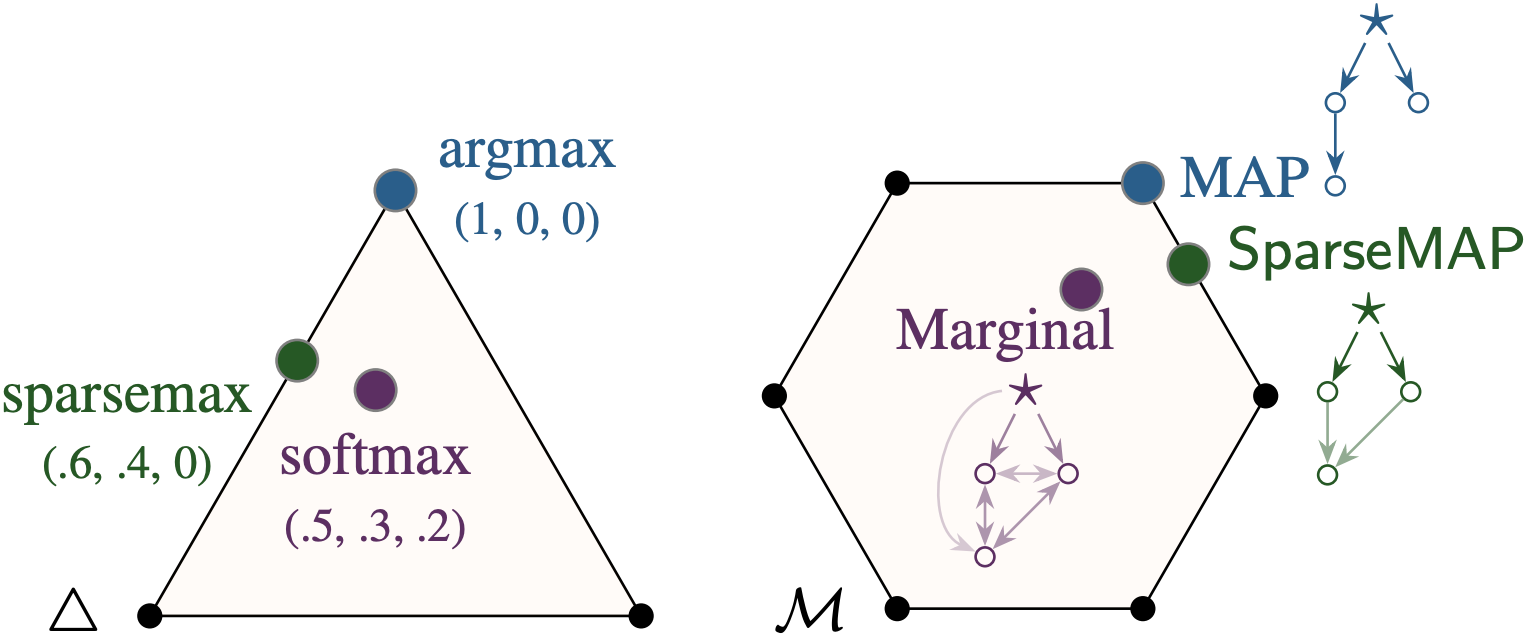}
	\caption{Illustration of the relationship between $\argmax$, $\softmax$, and $\operatorname{sparsemax}$ in the probability simplex (left), as well as their structured counterpart in the convex hull of all possible structures (right). Figure is taken from \citet{sparsemap}.}
	\label{fig:sparsemax-sparsemap}
\end{figure}

$\alpha$-$\operatorname{entmax}$, of which $\operatorname{sparsemax}$ is a special case, penalizes the norm of the structure, effectively encouraging sparser structures. $\operatorname{fusedmax}$ builds on $\operatorname{sparsemax}$ while including an additional penalty term that discourages changes in value between consecutive parts. See Figure~\ref{fig:relaxation-functions} for a visualization. \Cref{fig:sparsemax-sparsemap} (left) illustrates the relationship between $\argmax$, $\softmax$, and this family of approaches using $\operatorname{sparsemax}$ as an example.

Of course, these $\argmax$-based formulations cannot be na\"{i}vely applied since, again, $\argmax$ is not amenable to gradient-based optimization. Nevertheless, non-zero Jacobians specific to each of these functions have been derived. We refer readers to the original papers for details.

When $\vz$ is a structure, $\E_{\vz\sim\pi(\vz \mid x;\theta)}[\vz]$ can be intractable to obtain, and people have instead considered the part-marginal $\E_{\vz\sim\pi(\vz \mid x;\theta)}[z_i]$ by marginalizing over exponentially many $\vz$. Specifically:
\begin{align} \label{eq:marg}
	\pi(Z_i=z_i \mid x;\theta) = \E_{\vz\sim\pi(\vz \mid x;\theta)}[z_i] = \sum_{\vz\in\cZ} \pi(\vz \mid x;\theta) \mathbb{I}([\vz]_i=z_i)
\end{align}
where $\mathbb{I}(\cdot)$ is the indicator function. Of course, directly calculating the right-hand side of \Cref{eq:marg} is still in most cases intractable. Nevertheless, for specific cases, efficient algorithms exist, typically variants of the sum-product algorithm, such as the forward-backward algorithm~\citep{18626} and the inside-outside algorithm~\citep{doi:10.1121/1.2017061}.

Analogous to $\operatorname{sparsemax}$ and related functions in the unstructured case, sparser alternatives to \Cref{eq:marg} have been pursued. While the right-hand side of \Cref{eq:marg} cannot be directly evaluated, with a sparse distribution overs structures $\pi(\cdot)$, we can only enumerate over structures with a nonzero probability which allows tractable inference. This provides an alternative to using sum-product algorithms which in some cases are unavailable and, even when available, only provide local features (e.g., the probability of an arc between a pair of words), incapable of modeling more global structures which can be beneficial~\citep{corro-titov-2019-learning}. \citet{sparsemap,niculae-etal-2018-towards} developed the SparseMAP inference strategy which produces a combination of only a small number of structures. \Cref{fig:sparsemax-sparsemap} illustrates its correspondence with the unstructured setting. In addition to outperforming the $\softmax$ baseline on a variety of tasks, this sparse distribution over structures allows for improved interpretability: they discovered that it represents legitimate linguistic ambiguity. \citet{lp-sparsemap} proposed LP-SparseMAP that builds on SparseMAP while allowing approximate inference.
\section{\samplingcb{Marginal Likelihood Maximization via Sampling}}

We now reconsider our objective probabilistically as a conditional marginal likelihood model. We want to maximize the likelihood of the label $y$ given the input $x$, $p(y\mid x)$. Introducing and marginalizing over the latent variable $\vz$, this can be written as
\begin{align} \label{eq:sampling}
	\begin{split}
		p(y\mid x; \theta,\phi) &= \sum_{\vz\in\cZ} p(y\mid x,\vz;\phi) p(\vz\mid x;\theta) \\
		&= \E_{\vz\sim p(\vz \mid x;\theta)}[p(y\mid x,\vz;\phi)]
	\end{split}
\end{align}

This perspective in fact relates to using continuous relaxation where we considered $p(y\mid x,\E_{\vz\sim\pi(\vz \mid x;\theta)}[\vz];\phi)$. When the function that models the outer probability distribution is convex, though it often is not as parameterized by neural networks, by Jensen's inequality, the continuous relaxation perspective is a lower bound of this quantity. Hence, optimizing \Cref{eq:sampling} provides an alternative to continuous relaxation.

Nevertheless, the challenge of optimizing \Cref{eq:sampling}, or back to our formulation, $\E_{\vz\sim\pi(\vz \mid x;\theta)}[L(g(\vz;\phi),y)]$, with respect to $\theta$ lies in the fact that the expectation is taken over a distribution parameterized by $\theta$. This makes $\nabla_\theta \E_{\vz\sim\pi(\vz \mid x;\theta)}[L(g(\vz;\phi),y)]$ difficult to calculate.
Sometimes this gradient can be directly evaluated with special structures and algorithms (e.g., \citet{wiseman-etal-2018-learning}). In this survey, we focus on the more general setting where this might be intractable and review two methods that rewrite this quantity in forms that allow its approximation via sampling: score function estimator and reparameterization.\footnote{Technically, this formulation does not apply directly in an autoencoding setup where $y=x$, because we cannot use two separate networks to model $p(x\mid\vz)$ and $p(\vz\mid x)$. Directly maximizing the probability is often intractable (\citet{ammar2014conditional} being a notable exception), and requires optimizing the evidence lower bound, or ELBO. We omit the technical details for brevity, though most methods below also apply to ELBO optimization. See \citet{rush-etal-2018-deep} for a review. \label{fn:vae}}

\subsection{\sfecb{Score Function Estimator}} \label{sec:sfe}

One class of such methods uses score function estimators, also known as the \textsc{reinforce} algorithm~\citep{10.1007/BF00992696}, to compute this gradient. They are based on the identity
\begin{align}
	\nabla_{\theta}\pi(\vz \mid x;\theta)=\pi(\vz \mid x;\theta)\nabla_{\theta}\log\pi(\vz \mid x;\theta)
\end{align}
which leads to
\begin{align} \label{eq:sfe}
	\begin{split}
		\nabla_{\theta} \E_{\vz\sim\pi(\vz \mid x;\theta)}[L(g(\vz))] &= \nabla_{\theta} \sum_{\vz\in\cZ} L(g(\vz)) \pi(\vz \mid x;\theta) \\
		&= \sum_{\vz\in\cZ} L(g(\vz)) \nabla_{\theta} \pi(\vz \mid x;\theta) \\
		&= \sum_{\vz\in\cZ} L(g(\vz)) \pi(\vz \mid x;\theta)\nabla_{\theta}\log\pi(\vz \mid x;\theta) \\
		&= \E_{\vz\sim\pi(\vz \mid x;\theta)}[L(g(\vz))\nabla_{\theta}\log\pi(\vz \mid x;\theta)]
	\end{split}
\end{align}
This formulation allows approximating the gradient with Monte-Carlo samples.

The score function estimator can suffer from high variance due to its sampling nature considering the exponential number of possible trees and the lack of intermediate rewards. For example, \citet{nangia-bowman-2018-listops} showed that the trees induced by \citet{DBLP:conf/iclr/YogatamaBDGL17}, a method based on score function estimators, has a self-F1, which is a measure of variance, that is similar to random trees. Hence, variance reduction techniques are often employed in practice. For example, a control variate $b(x)$~\citep{10.5555/3042573.3042748} can be subtracted from $L$:
\begin{align}
	\nabla_{\theta} \E_{\vz\sim\pi(\vz \mid x;\theta)}[L(g(\vz))] =
	\E_{\vz\sim\pi(\vz \mid x;\theta)}[(L(g(\vz))-b(x))\nabla_{\theta}\pi(\vz \mid x;\theta)]
\end{align}
	
\subsection{\reparamcb{Reparameterization}} \label{sec:reparam}

An alternative to score function estimators is the reparameterization trick~\citep{vae}. Suppose we can sample from $\pi(\cdot\mid x;\theta)$ by first sampling a noise $\gamma$ from a simpler base distribution $\cB$ \emph{independent} of $\theta$ and then apply a transformation $h(\gamma;\theta)$. Then:
\begin{align} \label{eq:reparam}
    \begin{split}
        \nabla_{\theta} \E_{\vz\sim\pi(\vz \mid x;\theta)}[L(g(\vz))] &= \nabla_{\theta} \sum_{\vz\in\cZ} L(g(\vz)) \pi(\vz \mid x;\theta) \\
        &= \nabla_{\theta} \sum_{\gamma\in\operatorname{supp}(\cB)} L(g(h(\gamma; \theta))) \cB(\gamma) \\
        &= \sum_{\gamma\in\operatorname{supp}(\cB)} \cB(\gamma) \nabla_{\theta} L(g(h(\gamma; \theta))) \\
        &= \E_{\gamma\sim\cB}[\nabla_{\theta} L(g(h(\gamma; \theta)))]
    \end{split}
\end{align}
This allows, once again, Monte-Carlo methods to approximate the original gradient.

We can compare the score function estimator and reparameterization-based methods by contrasting the final line in Equations~(\ref{eq:sfe}) and (\ref{eq:reparam}). In reparameterization, we differentiate through the loss function, whereas in score function estimator, we only consider the gradient of the log probability of the structure and use the loss, or ``reward'' in reinforcement learning terminologies, as a black-box to drive the optimization. The former hence is more informative and empirically yields lower variance~\citep{rush-etal-2018-deep}. Nevertheless, not all probability distributions are reparameterizable.

The reparameterization trick was originally introduced to train variational auto-encoders~\citep{vae}. Thanks to the linearity of the normal distribution, sampling $X\sim\mathcal{N}(\mu,\sigma^2)$ can be equivalently reparameterized as $X=h(\gamma; \theta)=\mu+\sigma \gamma$ where $\gamma\sim\mathcal{N}(0,1)$, allowing gradients to be taken with respect to $\mu$ and $\sigma$ so that they can be learned. Nevertheless, it can be more difficult to reparameterize a discrete variable. We review a few methods to do this below.

\begin{figure}[t]
    \centering
    \includegraphics[width=0.5\textwidth]{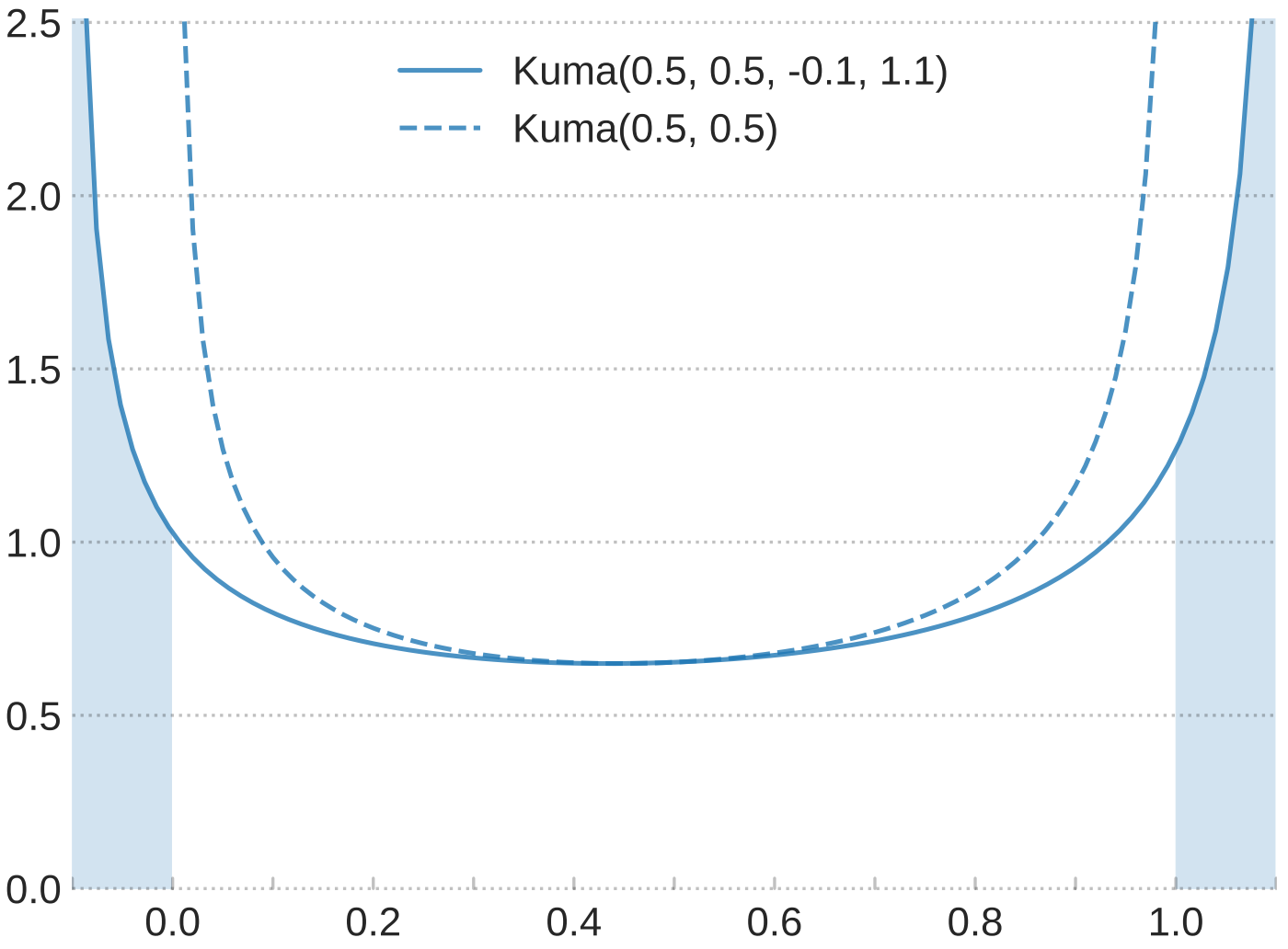}
    \caption{The Kumaraswamy distribution (dashed curve) and the HardKumaraswamy distribution (solid curve). The HardKumaraswamy distribution ``stretches'' the Kumaraswamy distribution and collapses mass beyond $[0,1]$. Figure is taken from \citet{bastings-etal-2019-interpretable}.}
    \label{fig:hardkuma}
\end{figure}

\subsubsection{Rectified Distributions}

Rectifying continuous distributions can ensure a non-trivial probability for discrete solutions. For example, \citet{bastings-etal-2019-interpretable} used a stretch-and-rectify procedure, originally proposed in \citet{louizos2018learning}, to design the continuous HardKumaraswamy distribution (solid curve in Figure~\ref{fig:hardkuma}) that is based on the Kumaraswamy distribution (dashed curve in Figure~\ref{fig:hardkuma}; \citealp{KUMARASWAMY198079}), which \emph{is} reparameterizable. Specifically, after a base noise is drawn $\gamma\sim\operatorname{unif}(0,1)$, it is fed through the inverse CDF of the Kumaraswamy distribution $k=F_K^{-1}(\gamma;a,b)\in(0, 1)$, stretched $t=l+(r-l)k$, and collapsed $z=\min(1,\max(0,t))$. Intuitively, while the original Kumaraswamy distribution does not have 0 and 1 in its support, we can stretch it from its original support $(0, 1)$ to $[l, r]$ and collapse the probability mass beyond 0 and 1 to the endpoints, allowing a non-trivial probability at these two points. This stretch-and-rectify procedure is visualized in \Cref{fig:hardkuma}, and we refer readers to \citet{bastings-etal-2019-interpretable} for more details. %

\subsubsection{Gumbel-Max}

The Gumbel-Max trick~\citep{GumbelEmilJulius1954Stoe,NIPS2014_309fee4e} can also be used to reparameterize the sampling of discrete variables. To sample from the \emph{unnormalized} scores $\vs$, the Gumbel-Max trick perturbs the scores with an additive noise $\gamma\sim\operatorname{Gumbel}(0,1)$, or equivalently, $U\sim\operatorname{unif}(0,1), \gamma=-\log(-\log(U))$. It can be shown that taking 
\begin{align} \label{eq:gumbel-max}
    \hat \vz = h(\gamma; \theta) = \argmax_{\vz\in\cZ} \vz^\top \vs+\gamma = \argmax_{\vz \in \cZ} S(\vz\mid x;\theta) + \gamma
\end{align}
where $\cZ$ consists of one-hot vectors, is equivalent to sampling from the $\softmax$-normalized $\vs$.\footnote{See the appendix of \citet{NIPS2014_309fee4e} for a proof.}

However, even with the Gumbel-Max trick, the non-differentiability still exists in \Cref{eq:gumbel-max}. Therefore, methods introduced in Section~\ref{sec:surrogate-gradients} or \ref{sec:relaxation} may still be required. With surrogate gradients, it becomes Gumbel STE, while with continuous relaxation, it yields Gumbel-softmax~\citep{gumbel-softmax,maddison2017concrete}. The two can be combined, where the forward pass uses a discrete $\argmax$ like in STE, while the $\softmax$-ed distribution is used in the backward pass (vs. the identity function in identity STE). This is sometimes called straight-through Gumbel-softmax~\citep{gumbel-softmax}.

In the structured case, it is intractable to perturb every possible $\vz\in\cZ$ separately. \citet{corro2018differentiable,corro-titov-2019-learning} proposed differentiable Perturb-and-Parse by perturbing scores of each part. Specifically:
\begin{align} \label{eq:perturb-and-parse}
    \hat{\vz} = \argmax_{\vz\in\cZ} \vz^\top (\vs+\gamma)
\end{align}
This $\argmax$ can be solved with max-product algorithms, though these algorithms usually have $\argmax$ operations within them that break backpropagation~\citep{mensch2018differntiable}. Therefore, \citet{corro2018differentiable,corro-titov-2019-learning} relaxed these operations with $\softmax$ that preserves the gradient flow. Nevertheless, unlike in the unstructured case, sampling using this method is not exact because it perturbs local parts rather than the global structure. This is apparent by contrasting Equations~(\ref{eq:gumbel-max}) and (\ref{eq:perturb-and-parse}).

\subsubsection{Direct Loss Minimization}

As an alternative to circumventing the non-differentiability introduced by Gumbel-Max using STE or Gumbel-softmax, \citet{lorberbom2019direct} proposed to adopt direct loss minimization, originally introduced by \citet{mcallester2010direct}. Direct loss minimization was developed to optimize arbitrary, not necessarily differentiable, loss functions under mild conditions. We state this formally in \Cref{thm:direct-loss-minimization}, taken from \citet{song2016training}, which is a generalization of the original theorem proposed in \citet{mcallester2010direct}:

\begin{theorem} \label{thm:direct-loss-minimization}
When given a finite set $\cY$, a scoring function $F(x,y,w)$, a data distribution, as well as a task-loss $L(y,\hat y)$, then, under some mild regularity conditions (see the supplementary material of \citet{song2016training} for details), the direct loss gradient has the following form:
\begin{equation}
    \nabla_w \E[L(y,y_w)] = \pm \lim_{\epsilon \rightarrow 0} \frac{1}{\epsilon} \E[\nabla_w F(x,y_{\text{direct}},w) - \nabla_w F(x,y_w,w)]\label{eq:DirectLossGradient}
\end{equation} 
with
\begin{align}
    y_w &= \argmax_{\hat y \in \cY} F(x,\hat y,w)\\
    y_{\text{direct}} &= \argmax_{\hat y \in \cY} F(x,\hat y,w) \pm \epsilon L(y,\hat y)
\end{align}
\end{theorem}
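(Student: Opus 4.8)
The plan is to route the argument through the \emph{loss-augmented} max function and an envelope (Danskin-type) identity, and then to spend the regularity conditions on a single delicate interchange of a limit, a gradient, and an expectation. First I would reduce to the ``$+$'' version: replacing $L$ by $-L$ turns the $+\epsilon L$ perturbation into the $-\epsilon L$ one and flips the sign of the right-hand side, so it suffices to prove the identity with $y_{\text{direct}} = \argmax_{\hat y\in\cY} \big( F(x,\hat y,w) + \epsilon L(y,\hat y)\big)$ and the $+$ sign in \eqref{eq:DirectLossGradient}; the ``$-$'' statement follows by symmetry.

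Fix a data point $(x,y)$ and set
\[
M_\epsilon(x,y,w) := \max_{\hat y \in \cY} \big( F(x,\hat y, w) + \epsilon L(y,\hat y) \big), \qquad M_0(x,y,w) := \max_{\hat y \in \cY} F(x,\hat y,w).
\]
Since $\cY$ is finite and $F$ is differentiable in $w$, Danskin's theorem gives, at every $w$ where the inner maximizer is unique, $\nabla_w M_\epsilon(x,y,w) = \nabla_w F(x,y_{\text{direct}},w)$ and $\nabla_w M_0(x,y,w) = \nabla_w F(x,y_w,w)$ (the $\epsilon L$ term contributes nothing to the $w$-gradient), so the bracketed quantity in \eqref{eq:DirectLossGradient} is exactly $\nabla_w\big(M_\epsilon - M_0\big)(x,y,w)$. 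Next I would record the pointwise limit $\tfrac{1}{\epsilon}\big(M_\epsilon(x,y,w) - M_0(x,y,w)\big)\to L(y,y_w)$ as $\epsilon\downarrow 0$: when $y_w$ is the unique maximizer of $F(x,\cdot,w)$ there is a strictly positive gap to the runner-up, and because $\cY$ is finite the perturbation $\epsilon L$ is uniformly bounded, so for all $\epsilon$ below a (sample-dependent) threshold one has $y_{\text{direct}} = y_w$ and hence $M_\epsilon - M_0 = \epsilon L(y,y_w)$ identically.

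Combining these two observations and interchanging $\nabla_w$ with $\E$ (the Leibniz rule, justified by boundedness of $\nabla_w F$ under the regularity conditions), I obtain for each fixed $\epsilon>0$
\[
\frac{1}{\epsilon}\,\E\big[\nabla_w F(x,y_{\text{direct}},w) - \nabla_w F(x,y_w,w)\big] \;=\; \nabla_w\,\E\big[\tfrac{1}{\epsilon}\big(M_\epsilon - M_0\big)\big],
\]
so it remains to show the right-hand side tends to $\nabla_w\E[L(y,y_w)]$ as $\epsilon\downarrow 0$. This is where the proof has teeth, and the step I expect to be the main obstacle: the pointwise convergence of $\tfrac{1}{\epsilon}(M_\epsilon - M_0)$ has a rate that degrades as the top two scores of $F(x,\cdot,w)$ approach a tie, so one cannot simply invoke dominated convergence and must instead pass the limit \emph{through} the gradient operator. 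The ``mild regularity conditions'' of \citet{song2016training} are tailored to exactly this: they bound the probability that two labels have scores within $O(\epsilon)$ of each other (so the event $y_{\text{direct}}\neq y_w$ has vanishing probability at a usable rate) and impose enough smoothness and boundedness on $F$ and $\nabla_w F$ that $w\mapsto\tfrac{1}{\epsilon}(M_\epsilon - M_0)$ converges to $w\mapsto L(y,y_w)$ locally uniformly — a difference-of-convex-functions argument when $F$ is linear in $w$, and the stated conditions in general — which is what legitimizes $\lim_{\epsilon\downarrow0}\nabla_w\E[\,\cdot\,] = \nabla_w\E[\lim_{\epsilon\downarrow0}(\,\cdot\,)]$. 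Granting that, the right-hand side tends to $\nabla_w\E[L(y,y_w)]$, establishing the formula, and the ``$-$'' case follows by the reduction in the first paragraph.
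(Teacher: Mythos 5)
The paper does not actually prove this theorem; its ``proof'' is a pointer to the supplementary material of \citet{song2016training}, so there is no in-paper argument to match yours against. Judged on its own terms, your skeleton is sound and well organized: the sign reduction (replace $L$ by $-L$) is valid; the envelope/Danskin identification of the bracket with $\nabla_w(M_\epsilon - M_0)$ at points of unique argmax is correct (the $\epsilon L$ term indeed contributes nothing to the $w$-gradient); and the observation that $\tfrac{1}{\epsilon}(M_\epsilon - M_0) = L(y,y_w)$ once $\epsilon$ drops below the (sample-dependent) gap-to-runner-up threshold is exactly the right pointwise statement. You also correctly diagnose that the theorem cannot be obtained by naive dominated convergence, since $y_{\text{direct}} = y_w$ eventually for a.e.\ sample, so the pointwise limit of the integrand in \eqref{eq:DirectLossGradient} is zero and all of the content lives on the vanishing-probability event of near-ties.

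The gap is that the one step you defer --- passing $\lim_{\epsilon\to 0}$ through $\nabla_w\E[\cdot]$ --- \emph{is} the theorem, and your proposed mechanisms for it do not go through as stated. Writing $G_\epsilon = (\psi_\epsilon - \psi_0)/\epsilon$ with $\psi_\epsilon(w) = \E[M_\epsilon]$, pointwise (or even uniform) convergence $G_\epsilon \to G_0$ does not yield $\nabla G_\epsilon \to \nabla G_0$; and the ``difference-of-convex-functions'' remark does not rescue this, because although each $\psi_\epsilon$ is convex when $F$ is linear in $w$, the individual terms $\psi_\epsilon/\epsilon$ and $\psi_0/\epsilon$ diverge as $\epsilon \to 0$, so the classical result on gradients of pointwise-convergent convex functions does not apply to the difference. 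What is really needed is a quantitative control of the measure of the decision-boundary region where $y_{\text{direct}} \neq y_w$ (in \citet{mcallester2010direct} this is done by an explicit first-order computation of the flux of probability mass across the boundaries where the argmax switches; \citet{song2016training} axiomatize the needed tie-probability and smoothness conditions for general $F$). Your sketch names the right obstacle but does not supply that computation, so as written it is an accurate roadmap rather than a proof; to complete it you would need to either carry out the boundary-measure argument or state and verify the locally uniform convergence of $\nabla G_\epsilon$ from the explicit regularity hypotheses rather than asserting that the hypotheses are ``tailored to'' it.
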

\begin{proof}
We refer readers to the supplementary material of \citet{song2016training}.
\end{proof}

In \Cref{thm:direct-loss-minimization}, in addition to executing the regular inference procedure to compute $y_w$, one also needs an additional \emph{loss-adjusted} inference for $y_\text{direct}$.

\citet{lorberbom2019direct} adapted \Cref{thm:direct-loss-minimization} to work with the Gumbel-Max reparameterization in variational auto-encoders (VAEs) that allows further expansion of \Cref{eq:reparam} by plugging in \Cref{eq:gumbel-max}. This is formalized in \Cref{thm:dlm-gumbel-max}, taken from their work and adapted to our notation:
\begin{theorem}
\label{thm:dlm-gumbel-max}
Assume that $S(\vz\mid x;\theta)$ is a smooth function of $\theta$. Then\footnote{Again ignoring the caveat in \Cref{fn:vae}.}
\begin{align} \label{eq:dlm-gumbel-max}
    \begin{split}
        \E_{\gamma\sim\cB}[\nabla_{\theta} L(g(h(\gamma; \theta)))] &= \E_{\gamma\sim\cB}[\nabla_{\theta} L(g(\hat \vz))] \\
        &= \nabla_\theta \E_{\gamma\sim\cB} [L(g(\hat\vz))] \\
        &= \lim_{\epsilon \rightarrow 0} \frac{1}{\epsilon} \E_{\gamma\sim\cB} [\nabla_\theta S(\hat\vz(\eps)\mid x;\theta) - \nabla_\theta S(\hat\vz\mid x;\theta)]
    \end{split}
\end{align}
with
\begin{align}
    \hat\vz &= \argmax_{\vz \in \mathcal{Z}} S(\vz\mid x;\theta) + \gamma \label{eq:dlm-gumbel-max-z}\\
    \hat\vz(\eps) &= \argmax_{\vz \in \mathcal{Z}} S(\vz\mid x;\theta) + \gamma + \epsilon L(g(\vz)) \label{eq:dlm-gumbel-max-zeps}
\end{align}
\end{theorem}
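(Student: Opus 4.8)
The plan is to obtain \Cref{thm:dlm-gumbel-max} as a corollary of \Cref{thm:direct-loss-minimization} by reading the Gumbel noise as part of the ``data.'' Concretely, I would instantiate \Cref{thm:direct-loss-minimization} with the finite label set $\cY := \cZ$, the task loss $\vz \mapsto L(g(\vz;\phi),y)$ with the downstream parameters $\phi$ and the gold label $y$ held fixed, and the noise-augmented scoring function $F\big((x,\gamma),\vz,\theta\big) := S(\vz\mid x;\theta) + \vz^\top\gamma$, whose maximizing argument over $\vz\in\cZ$ is, by \Cref{eq:gumbel-max} and \Cref{eq:dlm-gumbel-max-z}, exactly $\hat\vz$, while its loss-adjusted maximizer is exactly $\hat\vz(\eps)$ of \Cref{eq:dlm-gumbel-max-zeps} (with the $\pm$ sign matched to the one in \Cref{eq:DirectLossGradient}). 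Since $\gamma$ does not depend on $\theta$, $\nabla_\theta F = \nabla_\theta S$, so the gradient identity \Cref{eq:DirectLossGradient} collapses to the last line of \Cref{eq:dlm-gumbel-max}. The hypothesis that $S(\vz\mid x;\theta)$ is a smooth function of $\theta$ is precisely what supplies the smoothness of $F$ demanded by \Cref{thm:direct-loss-minimization}.

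The two preliminary equalities in \Cref{eq:dlm-gumbel-max} are bookkeeping connecting the statement to the reparameterization identity \Cref{eq:reparam}. The first, $h(\gamma;\theta) = \hat\vz$, is just the definition of the Gumbel-Max map in \Cref{eq:dlm-gumbel-max-z}. For the second, I would invoke the defining property of the Gumbel-Max trick (\Cref{eq:gumbel-max}), namely $\E_{\gamma\sim\cB}[L(g(\hat\vz))] = \E_{\vz\sim\pi(\vz \mid x;\theta)}[L(g(\vz))]$: the right-hand side is a smooth function of $\theta$ under the hypothesis, so $\nabla_\theta$ may be moved outside the expectation, and both sides equal the true gradient $\nabla_\theta\E_{\vz\sim\pi(\vz \mid x;\theta)}[L(g(\vz))]$ — exactly the content of \Cref{eq:reparam}. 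I would also flag that, read literally, $\E_{\gamma\sim\cB}[\nabla_\theta L(g(\hat\vz))]$ is delicate, since for a fixed $\gamma$ the map $\theta\mapsto\hat\vz$ is piecewise constant; the equality is to be understood in the sense of the reparameterization gradient (the gradient of the reparameterized objective), which is exactly why the direct-loss-minimization reformulation in the last line is what one actually estimates.

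The main obstacle is discharging the ``mild regularity conditions'' that \Cref{thm:direct-loss-minimization} inherits from \citet{song2016training} in this particular instantiation. Three points need care: (i) for a fixed input $x$, the relevant distribution is the law of $\gamma$, which must have an absolutely continuous density so that the $\argmax$ defining $\hat\vz$ is almost surely unique and $\theta\mapsto\E_{\gamma\sim\cB}[L(g(\hat\vz))]$ is differentiable — here this is automatic since $\gamma$ is drawn from a continuous (Gumbel) density, which is the structural reason Gumbel-Max composes with direct loss minimization; (ii) integrability of $\nabla_\theta S(\hat\vz\mid x;\theta)$ and $\nabla_\theta S(\hat\vz(\eps)\mid x;\theta)$, uniformly for small $\eps$, so the $\eps\to 0$ limit can be pulled inside the expectation; and (iii) control of ties and of the loss-adjusted maximizer $\hat\vz(\eps)$, which is easy because $\cZ$ is finite and hence $L\circ g$ is bounded on it. Finiteness of $\cZ$ makes (iii) and most of (ii) routine, so the real work is (i)--(ii) together with fixing the sign convention; once these are in place — they are the hypotheses of \Cref{thm:direct-loss-minimization} plus smoothness of $S$ — the theorem follows from the substitution above. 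A fully rigorous treatment would simply cite the regularity assumptions of \citet{song2016training} verbatim, as the authors do.
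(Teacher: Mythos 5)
Your proposal is correct and takes essentially the same route as the paper, which defers the proof to \citet{lorberbom2019direct}: that work likewise obtains \Cref{thm:dlm-gumbel-max} by instantiating the direct-loss-minimization result (\Cref{thm:direct-loss-minimization}) with the Gumbel-perturbed score as the scoring function and the noise distribution $\cB$ playing the role of the data distribution, with the regularity conditions discharged exactly as you describe. Your caveat that the first two equalities in \Cref{eq:dlm-gumbel-max} must be read as the gradient of the reparameterized objective (since $\theta\mapsto\hat\vz$ is piecewise constant for fixed $\gamma$) is also the correct reading.
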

\begin{proof}
We refer readers to the supplementary material of \citet{lorberbom2019direct}.
\end{proof}

The expansion in \Cref{eq:dlm-gumbel-max} circumvents the need to take the gradient through the $\argmax$ that is introduced by Gumbel-Max in \Cref{eq:reparam}. Intuitively, the gradient is estimated by slightly perturbing $\hat\vz$.

While \Cref{eq:dlm-gumbel-max} holds theoretically, one cannot implement the limit easily in a machine learning system. \citet{lorberbom2019direct} hence treated $\eps$ as a fixed hyperparameter that controls the bias-variance trade-off. Since too small an $\eps$ could lead to very large gradients, they only considered $\eps\ge0.1$. This, therefore, leads to a biased estimator for the true gradient. Nevertheless, they compared their method with Gumbel-softmax and showed that $\eps$ better controls the bias-variance trade-off than the temperature parameter in Gumbel-softmax.

In the structured case, direct loss minimization suffers from the same intractability as Gumbel-Max. \citet{lorberbom2019direct} therefore only considered pairwise structures. We believe attempting to combine direct loss minimization and methods such as Perturb-and-Parse would be an interesting future study.
\section{Applications} \label{sec:examples}

The methods introduced above have been widely adopted for optimizing models with latent discrete decisions. We review some applications below. We first discuss two settings where a downstream task is used as the training signal for latent structure induction, where the latent $z$ can either be a linguistic structure or an extractive summary of the input sentence. We also review when the target output is the same as the input, an auto-encoding setup, that allows unsupervised structure induction.

\subsection{Linguistic Structures}

As argued in \Cref{sec:intro}, latently inducing a linguistically-inspired structure that aids the downstream prediction can provide a useful inductive bias. Typically, a latent tree is induced from text, and some structure-encoding model such as TreeLSTM~\citep{tai-etal-2015-improved} or GCN~\citep{kipf2017semi} is used on top of this structure to compose the word-level representations into a sentence representation for the final prediction for various end tasks.

We presented graph-based parsing for bilexical dependencies in \Cref{sec:structure-prediction}. Transition-based parsers, on the other hand, uses the scoring function $S(\vz\mid x;\theta)=\sum_{\{\va : \operatorname{yield}(\va)=\vz\}}\prod_{i=1}^{\abs{\va}}f(a_i\mid x,\va_{<i};\theta)$, where $\operatorname{yield}(\cdot)$ is the mapping from a sequence of actions $\va=[a_1,\cdots,a_m]$, such as \textsc{shift} and \textsc{reduce}, to the deterministic structure produced by these actions. For constituency parsing, the default choice is the CKY algorithm~\citep{c,k,y}, a special case of chart parsing, which determines the optimal parse structure with dynamic programming by finding the optimal split point for spans.

\paragraph{Transition-based parsing and the CKY algorithm} are similar in that, in most settings, every decision sequence yields a valid structure. Therefore, we can simply backpropagate through the sequence of discrete decisions using the methods presented above. In this manner: \surrogatecitet{maillard-clark-2018-latent} trained a latent shift-reduce parser with STE. \relaxcitet{maillard_clark_yogatama_2019} induced a latent constituency structure with continuous relaxation using a $\softmax$ to normalize over all possible split points. \relaxcitet{bogin2021latent} used a similar architecture to improve compositional generalization for grounded question answering. \sfecitet{DBLP:conf/iclr/YogatamaBDGL17} learned a policy network for a latent dependency parser. \sfecitet{kim-etal-2019-unsupervised} trained an unsupervised recurrent neural network grammar~\citep{dyer-etal-2016-recurrent} that resembles a shift-reduce parser using reinforcement learning.
\reparamcitet{choi2018learning} latently induced a constituency tree by greedily choosing a pair of neighboring text spans to merge at each layer using straight-through Gumbel-softmax at each layer for sampling. \reparamcitet{hu-etal-2021-r2d2} used this method to design a heuristic pruning procedure that improves the $O(n^3)$ complexity of the CKY algorithm to $O(n)$. This efficiency also allowed them to pretrain this model with a language modeling objective.

\paragraph{Graph-based parsing} usually needs to be mindful of structural constraints. \surrogatecitet{peng-etal-2018-backpropagating} used SPIGOT to train latent syntactic and semantics parsers. \relaxcitet{kim2017structured} considered the expected local decisions by marginalizing over all possible structures, latently inducing a projective syntactic dependency tree structure (and also a linear sequential structure), which they termed ``structured attention.'' For example, if $Z_{pq}$ represents the existence of a syntactic edge between words $p$ and $q$ with $q$ being the parent, then one can obtain a context vector of $q$, $\vc_q$, by performing attention to the syntactic parent of $q$ with $\vc_q \propto \sum_p \pi(Z_{pq}=1 \mid x;\theta) \vx_p$ where $\vx_p$ denotes the representation of the word $p$. \relaxcitet{liu-etal-2018-structured} leveraged structured attention networks to induce latent constituency trees for richer inter-sentence attention~\citep{parikh-etal-2016-decomposable}. Structured attention networks, however, can be expensive during the marginalization step. For example, marginalizing over dependency trees using the inside-outside algorithm can result in a more than 10 times slowdown~\citep{liu-lapata-2018-learning} due to its lack of parallelizability. Additionally, it is inherently unable to consider non-projective dependency trees. To address these two issues, \relaxcitet{liu-lapata-2018-learning} proposed a generalization of structured attention networks by marginalizing using the Matrix-Tree Theorem~\citep{tutte1984graph} that allows the induction of non-projective dependency trees~\citep{koo-etal-2007-structured,smith-smith-2007-probabilistic,mcdonald-satta-2007-complexity} and with better parallelizability, achieving similar speed as their simple attention baseline. \relaxcitet{bisk-tran-2018-inducing} extended this method to machine translation and showed that it outperforms using simple attention as well as STE. \relaxcitet{niculae-etal-2018-towards} used SparseMAP to induce sparse distributions over latent dependency parses for sentence classification, natural language inference, and reverse dictionary lookup. \reparamcitet{zhou-etal-2020-amr} used a rectified distribution to induce latent dependency trees for syntactic-then-semantic AMR~\citep{banarescu-etal-2013-abstract}. \reparamcitet{corro-titov-2019-learning} used Perturb-and-Parse to obtain a latent dependency parser.

\subsection{Rationale Extraction}

One line of work concerns the interpretability of neural models and aims to extract the words and phrases in a span of text that are responsible for some task prediction, usually termed ``rationale extraction.'' \citet{lei-etal-2016-rationalizing} first introduced this task and considered the intermediate structure as a collection of Bernoulli variables, each corresponding to a word in the original text, with some independence assumption among them. A condensed summary is formed by collecting the words corresponding to the activated Bernoulli variables. Then, a predictor learns a mapping from the summary to the final prediction, which is expected to stay close to the gold label. The summary usually needs to be latently induced due to a lack of summary annotation. %

\sfecitet{lei-etal-2016-rationalizing} proposed to jointly train a rationale extractor and a downstream task predictor using score function estimators. They experimented with both an unstructured intermediate rationale where each word is independently selected as a Bernoulli random variable, as well a structured rationale where the selection of each word recurrently depends on previous selections. Their method achieved similar performance to using the full text while in most cases including only up to 30\% of the text. \reparamcitet{bastings-etal-2019-interpretable} used reparameterization with a rectified distribution and demonstrated improved performance.

\subsection{Auto-Encoder}

Another important body of work uses an autoencoding objective where the target $y$ is the same as the input $x$. A reconstruction loss is typically either maximized directly~\citep{ammar2014conditional} or, when it is intractable as is common with neural network parameterized models, using variational autoencoders (VAE;~\citealp{vae}). An encoder, or inference network, $f$ that models $p(\vz\mid x)$ and a reconstruction network $g$ that models $p(x\mid\vz)$ are jointly learned, after which we can take only the inference network to predict a structure from text.

With the output the same as the input, this allows completely unsupervised learning. Sometimes some amount of intermediate supervision is also added to ensure that the intermediate structure resembles some target formalism.

\paragraph{Linguistic Structures}
When $\vz$ is a linguistic structure, this method enables unsupervised or semi-supervised parsing.
\relaxcitet{drozdov-etal-2019-unsupervised-latent,drozdov-etal-2020-unsupervised} and \relaxcitet{xu-etal-2021-improved} used a continuous relaxation based approach for unsupervised and distantly-supervised parsing. \relaxcitet{shen2018neural} softly and latently attends to a word's syntactic siblings using proximity values induced by a ``syntactic distance'' for language modeling and unsupervised constituency parsing. \sfecitet{yin-etal-2018-structvae} used reinforcement learning to train a semi-supervised semantic parser. \reparamcitet{corro2018differentiable} used Perturb-and-Parse for semi-supervised parsing.

\paragraph{Others}
The methods reviewed in this survey have also been used in VAEs when $\vz$ is not a linguistic structure.
For example, \sfecitet{miao-blunsom-2016-language} considered $\vz$ as a compressed version of the input sentence and trained a semi-supervised sentence compression model.
Using Gumbel-softmax, \reparamcitet{zhou-neubig-2017-multi} sampled independent morphological label as $\vz$ for morphological re-inflection, and
\reparamcitet{rezaee-ferraro-2021-event} treated semantic frames as $\vz$ and trained a model for semi-supervised event modeling.
\section{What is Latently Learned?} \label{sec:analysis}

We now inspect the structure that is latently learned. We focus on latent syntax-inspired trees as they are easy to compare to human's syntactic judgment and there are treebanks, or silver parser-generated trees, for reference.

Perhaps surprisingly, despite the strong downstream performance of these latent structure models, the trees that many such models learn for the most part do not correspond to traditional linguistic formalisms and at best capture only shallow syntactic units. \citet{williams-etal-2018-latent} examined the reinforcement learning based model of \citet{DBLP:conf/iclr/YogatamaBDGL17} (RL-SPINN) and the straight-through Gumbel based model of \citet{choi2018learning} (ST-Gumbel). They took these models that are trained on NLI datasets and evaluated them on the Penn Treebank (PTB; \citealp{marcus-etal-1993-building}).
RL-SPINN achieved similar parsing performance as random trees, while ST-Gumbel was even worse. In particular, RL-SPINN's induced trees are highly similar to a purely left-branching structure with an \fone of $>99\%$. In contrast to this deepest possible strategy, the trees of ST-Gumbel are very shallow with an average depth of 4.2, close to 3.9 for balanced trees, and much shallower than silver parses with a 5.7 average depth. \citet{maillard-clark-2018-latent} confirmed that the models of themselves and of \citet{maillard_clark_yogatama_2019} do not induce trees that resemble those produced by the Stanford Parser. \citet{bisk-tran-2018-inducing} similarly showed that their method achieved an attachment score similar to or even worse than flat baselines.

\citet{nangia-bowman-2018-listops} introduced the ListOps dataset for synthetic math expression evaluation which is easy to solve with a parse structure but difficult otherwise. Both RL-SPINN and ST-Gumbel even underperform a simple LSTM model that has no structured modeling, suggesting that these models, with the datasets and training algorithms they used, cannot learn the structure required for this task. Nevertheless, \citet{havrylov-etal-2019-cooperative}, which improved over RL-SPINN as introduced in \Cref{sec:sfe}, achieved near-perfect performance on this dataset by reducing the gradient estimation variance and reducing the coadaptation issue of the parser and the downstream predictor.

Overall, it seems that for the latently induced structure to bear a close resemblance to existing syntactic formalisms is not a necessary condition for strong downstream task performance. In fact, many studies showed that, in latent structure learning, using parses from a pretrained parser deteriorates the performance~\citep{choi2018learning,maillard_clark_yogatama_2019}, pretraining on these parses does not help~\citep{kim2017structured}, and neither does including an auxiliary parsing loss~\citep{DBLP:conf/iclr/YogatamaBDGL17}. Some studies qualitatively inspected the latent structure and discovered deviation from existing formalisms that may benefit composition~\citep{bisk-tran-2018-inducing,niculae-etal-2018-towards}. For example, in coordination constructions, Universal Dependencies dictate the leftmost conjunct as the head~\citep{ud}, while \citet{niculae-etal-2018-towards} argued that a symmetric treatment may provide a better composition order. On the other hand, \citet{shi-etal-2018-tree} noted that trivial trees (i.e., balanced or linear trees) achieve performance similar to or better than latently induced trees or external parse trees on a variety of classification and generation tasks, casting doubt on the real source of performance improvement. More future work is needed to understand the role syntax plays in this picture and how it might vary across different tasks.

The story is different for methods for unsupervised parsing which can induce familiar linguistic formalisms with decent accuracy (though see \citet{li-risteski-2021-limitations} for a theoretical analysis of their limitations). These methods usually use a language modeling or autoencoding objective rather than receiving supervision from a natural language understanding task~\citep[\emph{inter alia}]{shen2018neural,kim-etal-2019-unsupervised,hu-etal-2021-r2d2}. Why these objectives are more conducive to the induction of a latent structure that better resembles existing linguistic formalisms would be an interesting future research question.
\section{Conclusion}

In this work, we surveyed three families of methods that tackle the non-differentiability of discrete decisions caused by the $\argmax$ function and introduced how they respectively handle the unique challenges of structure prediction. These methods and some of their key properties are summarized in \Cref{tab:summary}. We also reviewed past work that studied the induced latent structures and highlighted that they mostly do not closely resemble existing linguistic formalisms. We hope this survey can serve to catalyze future research in this area.

\section*{Acknowledgment}

We thank Greg Durrett, Andr\'{e} Martins, Vlad Niculae, Hao Peng, Noah Smith, Nishant Subramani, ordered alphabetically by last name, for helpful discussions and feedback on various versions of this work.

\bibliographystyle{acl_natbib}
\bibliography{project}
\end{document}